\newenvironment{customlegend}[1][]{
	\begingroup
	\csname pgfplots@init@cleared@structures\endcsname
	\pgfplotsset{#1}
}{
	\csname pgfplots@createlegend\endcsname
	\endgroup
}
\pgfplotsset{every tick label/.append style={font=\Huge}}
\def\addlegendimage{\csname pgfplots@addlegendimage\endcsname}
\newcolumntype{P}[1]{>{\centering\arraybackslash}p{#1}}
\algnewcommand\algorithmicinput{\hspace*{\algorithmicindent}\textbf{Input}}
\algnewcommand\Input{\item[\algorithmicinput]}
\algnewcommand\algorithmicoutput{\hspace*{\algorithmicindent}\textbf{Output}}
\algnewcommand\Output{\item[\algorithmicoutput]}
\algnewcommand\Not{\textbf{not}\xspace}
\algnewcommand\In{\textbf{in}\xspace}
\begin{document}
\title{Fast Dawid-Skene: A Fast Vote Aggregation Scheme for Sentiment Classification}

\author{Vaibhav B Sinha, Sukrut Rao, Vineeth N Balasubramanian}
\affiliation{%
	\department{Department of Computer Science and Engineering}
	\institution{Indian Institute of Technology Hyderabad}
   \state{Telangana}
   \country{India}
   \postcode{502285}
}
\email{cs15btech11034@iith.ac.in, cs15btech11036@iith.ac.in, vineethnb@iith.ac.in}


\begin{abstract}
Many real world problems can now be effectively solved using supervised machine learning. A major roadblock is often the lack of an adequate quantity of labeled data for training. A possible solution is to assign the task of labeling data to a crowd, and then infer the true label using aggregation methods. A well-known approach for aggregation is the Dawid-Skene (DS) algorithm, which is based on the principle of Expectation-Maximization (EM). We propose a new simple, yet effective, EM-based algorithm, which can be interpreted as a `hard' version of DS, that allows much faster convergence while maintaining similar accuracy in aggregation. We show the use of this algorithm as a quick and effective technique for online, real-time sentiment annotation. We also prove that our algorithm converges to the estimated labels at a linear rate. Our experiments on standard datasets show a significant speedup in time taken for aggregation - upto $\sim$8x over Dawid-Skene and $\sim$6x over other fast EM methods, at competitive accuracy performance. The code for the implementation of the algorithms can be found at  \url{https://github.com/GoodDeeds/Fast-Dawid-Skene}.
\end{abstract}

%
%
\begin{CCSXML}
<ccs2012>
<concept>
<concept_id>10003120.10003130</concept_id>
<concept_desc>Human-centered computing~Collaborative and social computing</concept_desc>
<concept_significance>500</concept_significance>
</concept>
<concept>
<concept_id>10010147.10010257</concept_id>
<concept_desc>Computing methodologies~Machine learning</concept_desc>
<concept_significance>500</concept_significance>
</concept>
<concept>
<concept_id>10010147.10010257.10010258.10010259</concept_id>
<concept_desc>Computing methodologies~Supervised learning</concept_desc>
<concept_significance>500</concept_significance>
</concept>
<concept>
<concept_id>10010147.10010257.10010282.10010284</concept_id>
<concept_desc>Computing methodologies~Online learning settings</concept_desc>
<concept_significance>500</concept_significance>
</concept>
<concept>
<concept_id>10002951.10003227</concept_id>
<concept_desc>Information systems~Information systems applications</concept_desc>
<concept_significance>300</concept_significance>
</concept>
</ccs2012>
\end{CCSXML}

\ccsdesc[500]{Human-centered computing~Collaborative and social computing}
\ccsdesc[500]{Computing methodologies~Machine learning}
\ccsdesc[500]{Computing methodologies~Supervised learning}
\ccsdesc[500]{Computing methodologies~Online learning settings}
\ccsdesc[300]{Information systems~Information systems applications}

\keywords{crowdsourcing, vote aggregation, expectation maximization, supervised learning}

\maketitle

\section{Introduction}
	Supervised learning has been highly effective in solving challenging tasks in sentiment analysis over the last few years. However, the success of supervised learning for the domain in recent years has been premised on the availability of large amounts of data to effectively train models. Obtaining a large labeled dataset is time-consuming, expensive, and sometimes infeasible; and this has often been the bottleneck in translating the success of machine learning models to newer problems in the domain.
	
	An approach that has been used to solve this problem is to crowdsource the annotation of data, and then aggregate the crowdsourced labels to obtain ground truths. Online platforms such as Amazon Mechanical Turk and CrowdFlower provide a friendly interface where data can be uploaded, and workers can annotate labels in return for a small payment. With the ever-growing need for large labeled datasets and the prohibitive costs of seeking experts to label large datasets, crowdsourcing has been used as a viable option for a variety of tasks, including sentiment scoring \cite{CSsentimentscoring}, opinion mining \cite{CScommodityreview}, general text processing \cite{Snow:2008:CFG:1613715.1613751}, taxonomy creation \cite{Bragg2013CrowdsourcingMC}, or domain-specific problems, such as in the biomedical field \cite{DBLP:journals/corr/GuanGDH17,Albarqouni2016AggNetDL}, among many others.

In recent times, there is a growing need for a fast and real-time solution for judging the sentiment of various kinds of data, such as speech, text articles, and social media posts. Given the ubiquitous use of the internet and social media today, and the wide reach of any information disseminated on these platforms, it is critical to have a efficient vetting process to ensure prevention of the usage of these platforms for anti-social and malicious activities. Sentiment data is one such parameter that could be used to identify potentially harmful content. A very useful source for identifying harmful content is other users of these internet services, that report such content to the service administrators. Often, these services are set up such that on receiving such a flag, they ask other users interacting with the same content to classify whether the content is harmful or not. Then, based on these votes, a final decision can be made, without the need for any human intervention. Some such works include: crowdsourcing the sentiment associated with words \cite{CSsentimenttoword}, crowdsourcing sentiment scoring for online media \cite{CSsentimentscoring}, crowdsourcing the classification of words to be used as a part of lexicon for sentiment analysis \cite{CSlexicon}, crowdsourcing sentiment judgment for video review \cite{CSvideoreview}, crowdsourcing for commodity review \cite{CScommodityreview}, and crowdsourcing for the production of word level annotation for opinion mining tasks \cite{CSsyntacticrelatedness}. However, with millions of users creating and adding new content every second, it is necessary that this decision be quick, so as to keep up with and effectively address all flags being raised. This indicates a need for fast vote aggregation schemes that can provide results for a stream of data in real time.

The use of crowdsourced annotations requires a check on the reliability of the workers and the accuracy of the annotations. While the platforms provide basic quality checks, it is still possible for workers to provide incorrect labels due to misunderstanding, ambiguity in the data, carelessness, lack of domain knowledge, or malicious intent. This can be countered by obtaining labels for the same question from a large number of annotators, and then aggregating their responses using an appropriate scheme. A simple approach is to use majority voting, where the answer which the majority of annotators choose is taken to be the true label, and is often effective. However, many other methods have been proposed that perform significantly better than majority voting, and these methods are summarized further in Section \ref{related}.
	
	Despite the various recent methods proposed, one of the most popular, robust and oft-used method to date for aggregating annotations is the Dawid-Skene algorithm, proposed by \cite{dawid1979maximum}, based on the Expectation Maximization (EM) algorithm. This method uses the M-step to compute error rates, which are the probabilities of a worker providing an incorrect class label to a question with a given true label, and the class marginals, which are the probabilities of a randomly selected question to have a particular true label.  These are then used to update the proposed set of true labels in the E-step, and the process continues till the algorithm converges on a proposed set of true labels (further described in Section \ref{dawidskenealgo}).
	
	In this work, we propose a new simple, yet effective, EM-based algorithm for aggregation of crowdsourced responses. Although formulated differently, the proposed algorithm can be interpreted as a `hard' version of Dawid-Skene (DS) \cite{dawid1979maximum}, similar to Classification EM \cite{celeux1992classification} being a hard version of the original EM. The proposed method converges upto 7.84x faster than DS, while maintaining similar accuracy. We also propose a hybrid approach, a combination of our algorithm with the Dawid-Skene algorithm, that combines the high rate of convergence of our algorithm and the better likelihood estimation of the Dawid-Skene algorithm as part of this work.
    
\section{Related Work}
	\label{related}
	The Expectation-Maximization algorithm for maximizing likelihood was first formalized by \cite{10.2307/2984875}. Soon after, Dawid and Skene \cite{dawid1979maximum} proposed an EM-based algorithm for estimating maximum likelihood of observer error rates, which became very popular for crowdsourced aggregation and is still considered by many as a baseline for performance. Many researchers, to this day, have worked on analyzing and extending the Dawid-Skene methodology (henceforth, called DS), of which we summarize the more recent efforts below. The work on crowdsourced data aggregation have not been confined only for sentiment analysis or opinion mining tasks, instead most of the methods are generic and can easily used for sentiment analysis and opinion mining tasks.
	
	A new model, GLAD, was proposed in \cite{NIPS2009_3644}, that could simultaneously infer the true label, the expertise of the worker, and the difficulty of the problem, and use this to improve on the labeling scheme. \cite{Raykar:2010:LC:1756006.1859894} improved upon DS by jointly learning the classifier while aggregating the crowdsourced labels. However, the efforts of \cite{NIPS2009_3644} were restricted to binary choice settings; and in the case of \cite{Raykar:2010:LC:1756006.1859894}, they focused on classification performance, which is however not the focus of this work. 
	
	\cite{ipeirotis2010quality} presented improvements over DS to recover from biases in labels provided by the crowd, such as cases where a worker always provides a higher label than the true label when labels are ordinal. More recently, \cite{NIPS2016_6124} analyzed and characterized the tradeoff between the cost of obtaining labels from a large group of people per data point, and the improved accuracy on doing so, as well as the differences in adaptive vs non-adaptive DS schemes.
	
	In addition to these efforts, there has also been a renewed interest in recent years to understand the rates of convergence of the Dawid-Skene method. \cite{minimax-optimal-convergence-rates-for-estimating-ground-truth-from-crowdsourced-labels} obtained the convergence rates of a projected EM algorithm under the homogeneous DS model, which however is a constrained version of the general DS model. \cite{NIPS2014_5431} proposed a two-stage algorithm which uses spectral methods to offset the limitations of DS to achieve near-optimal rate convergence. \cite{article} recently proposed a permutation-based generalization of the DS model, and derived optimal rates of convergence for these models. However, none of these efforts have explicitly focused on increasing the speed of convergence, or making Dawid-Skene more efficient in practice. The work in \cite{IWMV} is the closest in this regard, where they proposed an EM-based Iterative Weighted Majority Voting (IWMV) algorithm which experimentally leads to fast convergence. We use this method for comparison in our experiments.
	
	In addition to methods based on Dawid-Skene, other methods for vote aggregation have been developed, such as using Gaussian processes \cite{Rodrigues:2014:GPC:3044805.3044941} and online learning methods \cite{Welinder2010OnlineCR}. The scope of the problem addressed by Dawid-Skene has also been broadened, to allow cases such as when a data point may have multiple true labels \cite{DUAN20145723}. (In this work, we show how our method can be extended to this setting too.) For ensuring reliability of the aggregated label, a common approach is to use a large number of annotators, which may however increase the cost. To mitigate this, work has also been done to intelligently assign questions to particular annotators \cite{0768fc60fef84637864e13671a981243}, reduce the number of labels needed for the same accuracy \cite{Welinder2010OnlineCR}, consider the biases in annotators \cite{NIPS2011_4311} and so on. 
	Recent work on vote aggregation also includes deep learning-based approaches, such as \cite{Albarqouni2016AggNetDL,training-deep-neural-nets-aggregate-crowdsourced-responses,DBLP:journals/corr/abs-1709-01779}. A survey of many earlier methods related to vote aggregation can be found in the work of \cite{10.1007/978-3-642-41154-0_1} and \cite{sheshadri2013square}. Moreover, a benchmark collection of methods and datasets for vote aggregation is defined in \cite{sheshadri2013square}, which we use for evaluating the performance of our method.   
	
	While many new methods have been developed, the DS algorithm still remains relevant as being one of the most robust techniques, and is used as a baseline for nearly every new method. Inspired by \cite{celeux1992classification}, our work proposes a simple EM-based algorithm for vote aggregation, that provides a similar performance as Dawid-Skene but with a much faster convergence rate. We now describe our method.
    
	\section{Proposed Algorithm}
	\label{algos}
	We propose an Expectation-Maximization (EM) based algorithm for efficient vote aggregation. The E-step estimates the dataset annotation based on the current parameters, and the M-step estimates the parameters which maximize the likelihood of the dataset. Starting from a set of initial estimates, the algorithm alternates between the M-step and the E-step till the estimates converge. Although formulated using a different approach to the aggregation problem, we call our algorithm Fast Dawid-Skene (FDS), because of its similarity to the DS algorithm (described in Section \ref{dawidskenealgo}).
	
	\subsection{Preliminaries}
	\label{subsec_preliminaries}
	For convenience, we use the analogy of a question-answer setting to model the crowdsourcing of labels. The data shown to the crowd is viewed as a question, and the possible labels as choices of answers from the crowd worker/participant. Let the questions (data points, problems) that need to be answered be $q = \{1,2,3,\dots,Q\}$ and the annotators (participants, workers) labeling them be $a = \{1,2,3,\dots,A\}$. The task requires the participants to label each question by selecting one of the predefined set of choices (options),  $c = \{1,2,3,\dots, C\}$, which has the same length across all questions. A participant is said to answer a given question when s/he chooses an option as the answer for that question. A participant need not answer all the questions, and in fact, for a large pool of questions, it is reasonable to assume that a participant might be invited to answer only a small subset of all the questions. Each question is assumed to be answered by at least one participant (ideally, more). We also assume that the choice selected by a participant for a  question is independent of the choice selected by any other participant. This assumption holds for real-world applications that use contemporary crowdsourcing methods, where participants generally do not know each other, and are often physically and geographically separated, and thus do not influence each other. Besides, while answering a question, the participants have no knowledge of the choices chosen by previous participants in these settings. 
	
	\subsection{The Fast Dawid-Skene Algorithm}
	\label{ouralgo}
	We now derive the proposed Fast Dawid-Skene (FDS) algorithm under the assumption that each question has only one correct choice, and that a participant can select only one choice for each question. (In Section \ref{discussions}, we show how our method can be extended to relax this assumption.)
Our goal is to aggregate the choices of the crowd for a question and to approximate the correct choice. Consider the question $q$. Let the $K$ participants that answered this question be $\{q_1, q_2, \dots, q_K\}$.  The value of $K$ may vary for different questions. Let the choices chosen by these $K$ participants for question $q$ be $\{c_{q_1}, c_{q_2}, \dots, c_{q_K}\}$, and the correct (or aggregated) answer to be estimated for the question $q$ be $Y_q$. We define the answer to the question $q$ to be the choice $c \in \{1,2,\dots,C\}$ for which $P\left(Y_{q} = c | c_{q_1}, c_{q_2}, \dots, c_{q_K}\right)$ is maximum. Using Bayes' theorem and the independence assumption among participants' answers, we obtain:
	\begin{align}\label{e1}
	P&(Y_{q} = c | c_{q_1},c_{q_2},\dots, c_{q_K})\nonumber \\ 
	&= \frac{P(c_{q_1}, c_{q_2}, \dots, c_{q_K} | Y_{q} = c)P(Y_{q} = c)}{\sum\limits_{c=1}^{C} P(c_{q_1}, c_{q_2}, \dots, c_{q_K} | Y_{q} = c)P(Y_{q} = c)}\nonumber\\
	&= \frac{\left(\prod\limits_{k = 1}^{K} P(c_{q_k} | Y_{q} = c)\right)P(Y_{q} = c)}{\sum\limits_{c = 1}^{C} \left(\prod\limits_{k = 1}^{K} P(c_{q_k} | Y_{q} = c)\right)P(Y_{q} = c) }
	\end{align}
	
	Let $T_{qc}$ be the indicator that the answer to question $q$ is choice $c$. Using our formulation:
	\begin{equation}\label{e2}
	T_{qc} = \begin{cases} 1 &c = \underset{j \in \{1,2,\dots,C\}}{\arg\max} P(Y_{q} = c | c_{q_1}, c_{q_2}, \dots, c_{q_K}) \\ 0 & \text{otherwise}
	\end{cases}
	\end{equation}
	These $T_{qc}$s serve as the proposed answer sheet.
	
	To determine the correct (or aggregated) choice for a question $q$, we need the values of $P(c_{q_k} | Y_{q} = c)$ for all $k$ and $c$, which however is not known given only the choices from the crowd annotators. However, if the correct choices are known for all the questions, we can compute these parameters. Let $q_k$ be the annotator $a$. To compute the parameters, we first define the following sets:
	\begin{equation*}S_{a}^{(c)} = \left\{ i\, |\, Y_i = c \wedge a \text{ has answered question } i \right\}
	\end{equation*} 
	and 
	\begin{equation*}T_{c_a}^{(c)} = \left\{ i \,|\, Y_i = c \wedge a \text{ has answered } c_a \text{ on question } i \right\}
	\end{equation*} 
	Then, we have:
	\begin{equation}\label{e3}
	P(c_a | Y_{q} = c) = \frac{ \left| T_{c_a}^{(c)} \right|}{ \left| S_a^{(c)} \right|}
	\end{equation}
	where $\left| \cdot \right| $ denotes the cardinality of the set.
	Also, $P(Y_{q} = c)$ can be defined as:
	\begin{equation}\label{e4}
	P(Y_{q} = c) = \frac{\text{Number of questions having answer as }c}{\text{Total number of questions}}
	\end{equation}
	The above quantities can be estimated if we have the correct choices, and conversely, the correct choices can be obtained using the above quantities. We hence use an Expectation-Maximization (EM) strategy, where the E-step calculates the correct answer for each question, while the M-step determines the maximum likelihood parameters using equations \ref{e3} and \ref{e4}. There are no pre-calculated values of parameters to begin with, and so in the first E-step, we estimate the correct choices using majority voting. We continue applying the EM steps until convergence. We use the total difference between two consecutive class marginals being under a fixed threshold as the convergence criterion. We discuss the convergence criterion in more detail in Section \ref{experiments}. The proposed algorithm is summarized below in Algorithm \ref{fdsalgorithm}.
	\begin{algorithm}
		\caption{The Fast Dawid-Skene Algorithm}\label{fdsalgorithm}
		\begin{algorithmic}[1]
			\Input Crowdsourced choices of $Q$ questions by $A$ participants (annotators) from $C$ choices
			\Output Proposed true choices - $T_{qc}$
			\State Estimate $T$s using majority voting.
			\Repeat
			\State \textit{M-step:} Obtain the parameters, $P(c_a | Y_{q} = c)$ and $P(Y_{q} = c)$ using Equations \ref{e3} and \ref{e4}
			\State \textit{E-step:} Estimate $T$s using the parameters, $P(c_a | Y_{q} = c)$ and $P(Y_{q} = c)$, and with the help of Equations \ref{e2} and \ref{e1}.
			\Until convergence
		\end{algorithmic}
	\end{algorithm}
	
	\subsection{Connection to Dawid-Skene Algorithm}
	\label{dawidskenealgo}
	The Dawid-Skene algorithm \cite{dawid1979maximum} was one of the earliest EM-based methods for aggregation, and still remains popular and competitive to newer approaches. In this subsection, we briefly describe the Dawid-Skene methodology, and show the connection of our approach to this method.
	
	As defined in \cite{dawid1979maximum}, the maximum likelihood estimators for the DS method are given by:
	\begin{footnotesize}
		\begin{align*}
		\hat{\pi}_{cl}^{(a)} &= \frac{\text{number of times participant $a$ chooses $l$ when $c$ is correct}}{\text{number of questions seen by participant $a$ when $c$ is correct}}
		\end{align*}
	\end{footnotesize}
	\noindent and $\hat{p_c}$, which is the probability that a question drawn at random has a correct label of $c$. Let $n_{ql}^{(a)}$ be the number of times participant $a$ chooses $l$ for question $q$. Let $\{T_{qc} : q = 1,2,\dots, Q\}$ be the indicator variables for question $q$. If choice $m$ is true, for question $q$, $T_{qm} = 1$ and $\forall j \ne m,\,T_{qj} = 0$. Given the assumptions made in Section \ref{subsec_preliminaries}, when the true responses of all questions are available, the likelihood is given by:
	\begin{equation}\label{e8}
	\prod_{q=1}^{Q} \prod_{c=1}^{C} \left\{ p_c \prod_{a=1}^{A} \prod_{l=1}^{C} \left(\pi_{cl}^{(a)}\right)^{n_{ql}^{(a)}}\right\}^{T_{qc}}
	\end{equation}
	where $n_{ql}^{(a)}$ and $T_{qc}$ are known. Using equation \ref{e8}, we obtain the maximum likelihood estimators as:
	\begin{equation}\label{e9}
	\hat{\pi}_{cl}^{(a)} = \frac{\sum_q T_{qc} n_{ql}^{(a)}}{\sum_l \sum_q T_{qc} n_{ql}^{(a)}}
	\end{equation}
	\begin{equation}\label{e10}
	\hat{p}_c = \frac{\sum_q T_{qc}}{Q}
	\end{equation} 
	We then obtain using Bayes' theorem:
	\begin{equation}\label{e11}
	p(T_{qc} = 1 | \text{data}) = \frac{\prod_{a=1}^{A} \prod_{l=1}^{C} (\pi_{cl}^{(a)})^{n_{ql}^{(a)}} p_c }{ \sum_{r=1}^{C} \prod_{a=1}^{A} \prod_{l=1}^{C} (\pi_{rl}^{(a)})^{n_{ql}^{(a)}} p_r}
	\end{equation}
	The DS algorithm is then defined by using equations \ref{e9} and \ref{e10} to obtain the estimates of $p$s and $\pi$s in the M-step, followed by using equation \ref{e11} and the estimates of $p$s and $\pi$s to calculate the new estimates of $T$s in the E-step. These two steps are repeated until convergence (when the values don't change over an iteration).
	
	A close examination of the DS and proposed FDS algorithms shows that our algorithm can be perceived as a `hard' version of DS. The DS algorithm derives the likelihood assuming that the correct answers (which are ideally binary-valued) are known, but uses the values for $T_{qc}$ (which form a probability distribution over the choices) directly as obtained from equation \ref{e11}. Instead, in our formulation, we always have $T_{qc}$ as either $0$ or $1$ after each E-step. Our method is similar to the well-known Classification EM proposed in \cite{celeux1992classification}, which shows that a `hard' version of EM significantly helps fast convergence and helps scale to large datasets \cite{jollois2007speed}. We show empirically in Section \ref{experiments} that this subtle difference between DS and FDS ensures that changes in the answer sheet dampens down quickly, and allows our method to converge much faster than DS with comparable performance.
	A careful implementation for both FDS and DS provides a solution in $O(QACn)$ time under the assumption that there is only one correct choice for each question, where $n$ is the number of iterations required by the algorithm to converge. As the cost per iteration of FDS would be similar to DS by the nature of its formulation, this implies that the speedup of our algorithm is proportional to the ratio of the number of iterations required to converge by the two algorithms, which we also confirm experimentally.

	\subsection{Theoretical Guarantees for Convergence}
    In this subsection, we establish guarantees for convergence. We prove that if we start from an area close to a local maximum of the likelihood, we are guaranteed to converge to the maximum at a linear rate. For the analysis of our algorithm's convergence, we first frame it in a way similar to the Classification EM algorithm as proposed by \cite{celeux1992classification}. Classification EM introduces an extra C-step (Classification step) after the E-step. This is the step that assigns each question a single answer, thus doing a `hard' clustering of questions based on options instead of the `soft' clustering by DS.

	To continue with the proof we will use the notation used for DS. The term 
$ P(c_{q_k} | Y_{q} = c)$ for FDS is replaced by $\pi_{cc_{qk}}^{q_k}$ and the term 
$  P(Y_q = c) $ for FDS is replaced by $p_c$. $n_{ql}^{(a)}$ used by DS would be either $1$ or $0$ for the setting considered. 

Having established the analogy, we restate the algorithm in CEM form (Algorithm \ref{cemalgorithm}).

\begin{algorithm}
	\caption{The Fast Dawid-Skene Algorithm}\label{cemalgorithm}
	\begin{algorithmic}[1]
		\Input Crowdsourced choices of $Q$ questions by $A$ participants (annotators) from $C$ choices
		\Output Proposed true choices - $T_{qc}$
		\State Estimate $T$s using majority voting. This essentially does the first E and C step.
		\Repeat
		\State \textit{M-step:} Obtain the parameters, $\pi$s and $p$s using Equations \ref{e3} and \ref{e4}
		\State \textit{E-step:} Estimate $T$s using the parameters, $\pi$ and $p$, and with the help of Equation \ref{e1}.
		\State \textit{C-step:} Assign $T$s using the values obtained in the E-step and Equation \ref{e2}.
		\Until convergence
	\end{algorithmic}
\end{algorithm}

We prove the convergence of the CEM algorithm similar to \cite{celeux1992classification}. For the proof, let us first form partitions. We form $C$ partitions out of all the questions based on their correct answer in a step.
\begin{equation}
	P_c = \{q | Y_q = c\}
\end{equation}

In the CEM approach, each question can belong to only one partition. 
Now, we define the CML (Classification Maximum Likelihood) criterion:
\begin{equation}
	C_2(P,p,\pi) = \sum_{c=1}^{C} \sum_{q \in P_c} \log \left({ p_c f(q, \pi_c)}\right)
\end{equation}
In the above equation, $\pi_c = \{\pi_{cj}^{(a)} | \forall j \in \{1\dots C\} \text{ and a } \in \{1\dots A\} \}$ and 
\begin{equation}
	f(q,\pi_c) = \prod_{a=1}^{A} \prod_{l=1}^{C} \left(\pi_{cl}^{(a)}\right)^{n_{ql}^{(a)}}
\end{equation}

To prove convergence, we define a few more notations. Note that we begin the algorithm by first doing a majority vote. This assigns each question to a class and forms the first partition. We denote this partition as $P^0$. We then proceed to the M-step and estimate $\pi$ and $p$. Let us denote this first set of parameters by $\pi^1$ and $p^1$. The next EC step gives the next partition, $P^1$. Thus, the algorithm continues to calculate $(P^{m}, p^{m+1}, \pi^{m+1})$ from $(P^{m}, p^{m}, \pi^{m})$ in the M step. Then, in the EC step, it calculates $(P^{m+1}, p^{m+1}, \pi^{m+1})$ from $(P^{m}, p^{m+1}, \pi^{m+1})$.
\begin{theorem}
For the sequence $(P^{m}, p^{m}, \pi^{m})$ obtained by FDS, the value of $C_2(P^{m}, p^{m}, \pi^{m})$	increases and converges to a stationary value. Under the assumption that $p$s and $\pi$s are well defined, the sequence $(P^{m}, p^{m}, \pi^{m})$ converges to a stationary point.
\end{theorem}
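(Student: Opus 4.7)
The plan is to mirror the convergence argument that Celeux and Govaert give for Classification EM: show that each M-step and each EC-step weakly increases the CML criterion $C_2$, conclude that $C_2(P^{m},p^{m},\pi^{m})$ is monotone non-decreasing and bounded above, and then leverage the finiteness of the space of partitions to upgrade convergence of the criterion into convergence of the iterates themselves.

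First I would verify the monotonicity of the M-step. Holding the partition $P^{m}$ fixed, $C_2(P^{m}, p, \pi)$ is precisely the complete-data log-likelihood of a product of multinomials in $p$ and $\pi$ (cf.\ Equation~\ref{e8}), and its unique maximizers are the closed-form ratios in Equations~\ref{e9} and~\ref{e10}, namely $p^{m+1}$ and $\pi^{m+1}$. Hence $C_2(P^{m},p^{m+1},\pi^{m+1}) \ge C_2(P^{m},p^{m},\pi^{m})$, with equality only when the parameters were already optimal for $P^{m}$. Next, for the EC-step, assigning question $q$ to class $c$ contributes exactly $\log(p_c\, f(q,\pi_c))$ to $C_2$, and the C-step defined by Equation~\ref{e2} picks the class that maximizes $p_c\, f(q,\pi_c)$ for each $q$ independently (the denominator of the posterior in Equation~\ref{e1} does not depend on $c$, so the argmax over $c$ of the posterior equals the argmax of $p_c\, f(q,\pi_c)$). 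Therefore $P^{m+1}$ is a term-by-term maximizer of $C_2(\cdot, p^{m+1}, \pi^{m+1})$ over all partitions, giving $C_2(P^{m+1},p^{m+1},\pi^{m+1}) \ge C_2(P^{m}, p^{m+1},\pi^{m+1})$. Chaining the two inequalities yields the desired monotonicity, and since $C_2$ is a sum of logarithms of probabilities it is bounded above by $0$, so the scalar sequence converges to a finite limit.

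Finally, to upgrade this to convergence of $(P^{m}, p^{m}, \pi^{m})$ itself, I would exploit the fact that the set of partitions of $Q$ questions into $C$ classes is finite. Under the well-definedness hypothesis, the M-step deterministically maps each partition to a unique $(p,\pi)$, and the EC-step then deterministically maps this to a unique next partition; thus the whole iteration is a deterministic map on a finite set and its orbit must eventually enter a cycle. Because $C_2$ has already been shown to converge, it must be constant along that cycle, while both substeps are strictly monotone whenever the iterate actually changes (again thanks to uniqueness of the MLE and of the per-question argmax under the well-definedness assumption). This forces the cycle length to be one, so the sequence stabilizes at some fixed point $(P^\star, p^\star, \pi^\star)$ of the CEM iteration, which is a stationary point of $C_2$. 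The main obstacle I expect is precisely this last step: carefully ruling out nontrivial cycles requires strict monotonicity of at least one substep whenever the iterate moves, and this is the place where the hypothesis that the $p$s and $\pi$s are well defined is doing real work.
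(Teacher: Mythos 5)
Your proposal follows essentially the same route as the paper's own proof: identify $C_2$ with the complete-data log-likelihood under hard assignments so that the M-step (Equations~\ref{e9}--\ref{e10}) maximizes it for fixed $P^{m}$, observe that the C-step's per-question argmax of $p_c f(q,\pi_c)$ maximizes each summand so the EC-step also increases $C_2$, and then use finiteness of the set of partitions plus the equality case to conclude stabilization. Your treatment of the last step is somewhat more explicit than the paper's (ruling out nontrivial cycles via strict monotonicity when the iterate moves, rather than just asserting convergence after finitely many iterations), but it is the same argument in substance.
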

\begin{proof}
	To prove the above theorem we prove that \\$C_2(P^{m+1}, p^{m+1}, \pi^{m+1}) \ge C_2(P^{m}, p^{m}, \pi^{m}) \, \forall m > 1$.\\
	Note that equations \ref{e3} and \ref{e4} maximize the likelihood given the values of $T$ and $n$ (as shown by \cite{dawid1979maximum}), i. e. $T$ is known, and so $\pi$s and $p$s obtained by the M-step maximize the likelihood. We need to show that maximizing the likelihood is the same as maximizing the CML criterion, $C_2$. In the case of hard clustering, for each $q$, only one class, $c$, can have $T_{qc}$ as $1$; all other classes will have $T_{qc}$ as 0. With this observation, we can rewrite the CML criterion as:
	\begin{align}
		C_2(P,p,\pi) &= \sum_{c=1}^{C} \sum_{q \in P_c} \log (p_c f(q, \pi_c))\\
					 &= \log \left\{\prod_{q=1}^{Q} \prod_{c=1}^{C} \left( p_c f(q, \pi_c) \right)^{T_{qc}} \right\}\\
					 &= \log \left\{ \prod_{q=1}^{Q} \prod_{c=1}^{C} \left( p_c \prod_{a=1}^{A} \prod_{l=1}^{C} \left(\pi_{cl}^{(a)}\right)^{n_{ql}^{(a)}} \right)^{T_{qc}} \right\}
	\end{align}
    
	Thus, maximizing maximum likelihood is equivalent to maximizing $C_2$. So, we have that after the M step, $C_2(P^{m}, p^{m+1}, \pi^{m+1}) \ge C_2(P^{m}, p^{m}, \pi^{m})$.\\
    
	Now, we consider the EC step. Observe that for each question $q$, we choose the answer as the option $c'$ for which $p_c' f(q,\pi_c') \ge p_c f(q,\pi_c)$ for all $c$ (By definition of the criterion for the C-step). Thus, $\log { p_c f(q, \pi_c)}$ increases individually for each question, and so cumulatively,
	$C_2(P^{m+1}, p^{m+1}, \pi^{m+1}) \ge C_2(P^{m}, p^{m+1}, \pi^{m+1})$.\\
	Combining the two inequalities, we obtain, 
	\begin{equation}
		C_2(P^{m+1}, p^{m+1}, \pi^{m+1}) \ge C_2(P^{m}, p^{m}, \pi^{m})
	\end{equation}
    
	This proves that $C_2$ increases at each step. Since the number of questions are finite and so the number of partitions as well are finite; the value of $C_2$ must converge after a finite number of iterations.\\
	On convergence, we obtain $ C_2(P^{m+1}, p^{m+1}, \pi^{m+1}) = \\C_2(P^{m}, p^{m+1}, \pi^{m+1}) = C_2(P^{m}, p^{m}, \pi^{m})$ for some $m$. By definition of the C-step, the first equality implies that $P^{m+1} = P^{m}$. Also under the assumption that $p$s and $\pi$s are well defined, we have that $p^m = p^{m+1}$ and $\pi^{m+1} = \pi^m$. This proves the convergence to a stationary point.
\end{proof}

To prove the rate of convergence, we define $M$ to be the set of matrices $U \in \mathbb{R}^{C \times Q}$ of nonnegative values. The matrices are defined such that the summation of values in each column is 1 and the summation along each row is nonzero.\\
Consider the criterion to be maximized as: 
\begin{equation}
	C_2'(U,p,\pi) = \sum_{c=1}^{C} \sum_{q=1}^{Q} u_{qc} \log (p_c f(q, \pi_c))
\end{equation}

With the above definitions, proposition 3 of \cite{celeux1992classification} guarantees a linear rate of convergence for FDS to a local maximum from a neighborhood around the maximum.

	\subsection{Hybrid Algorithm}
	\label{hybridalgo}
	While the proposed FDS method is quick and effective, by using the softer marginals, DS can obtain better likelihood values (which we found in some of our experiments too). A comparison of the likelihood values over multiple datasets (described in Section 4) is provided in Table 2.  To bring the best of both DS and FDS, we propose a hybrid version, where we begin with DS, and at each step, we keep track of sum of the absolute values of the difference in class marginals ($p_c$s). When this sum falls below a certain threshold, we switch to the FDS algorithm and continue (Algorithm \ref{hybalgorithm}). Our empirical studies showed that this hybrid algorithm can maintain high levels of accuracy along with faster convergence (Section \ref{experiments}). We however observe that a similar likelihood to DS does not necessarily translate to better accuracy, and in fact FDS outperforms Hybrid on some datasets.
	
	\begin{algorithm}
		\caption{The Hybrid Algorithm}\label{hybalgorithm}
		\begin{algorithmic}[1]
			\Input Crowdsourced choices for $Q$ questions by $A$ participants given $C$ choices per question, threshold $\gamma$
			\Output Aggregated choices: $T_{qc}$
			\State Estimate $T$s using majority voting.
			\Repeat
			\State \textit{M-step:} Obtain  parameters, $\hat{\pi}_{cl}^{(a)}$ and $\hat{p}_c$ using equations \ref{e9} and \ref{e10}
			\State \textit{E-step:} Estimate $T$s using parameters, $\hat{\pi}_{cl}^{(a)}$ and $\hat{p}_c$ using equation \ref{e11}.
			\Until $\sum_c | p_c^t - p_c^{t-1} | < \gamma$
			\Repeat
			\State EM steps of Algorithm \ref{fdsalgorithm} (FDS)
			\Until convergence
		\end{algorithmic}
	\end{algorithm}

	\section{Experimental Results}
	\label{experiments}
	We validated the proposed method on several publicly available datasets for vote aggregation, and the results are presented in this section. We first describe the datasets, competing methods used for comparison and the performance metrics used before presenting the results.
	
	\paragraph{Datasets:} We used seven real-world datasets to compare the performance of the proposed method against other methods. These include 
	\textit{LabelMe} \cite{Russell2008,R7807338}, \textit{SentimentPolarity (SP)} \cite{Pang:2005:SSE:1219840.1219855,Rodrigues:2014:GPC:3044805.3044941}, \textit{DAiSEE} \cite{d2016daisee,kamath2016crowdsourced}, and four datasets from the SQUARE benchmark \cite{sheshadri2013square}:  \textit{Adult2} \cite{ipeirotis2010quality}, \textit{BM} \cite{DBLP:journals/corr/abs-1209-3686}, \textit{TREC2010} \cite{Buckley10-notebook}, and \textit{RTE} \cite{Snow:2008:CFG:1613715.1613751}.
	
	Many of the datasets had varying number of annotators per data point. For uniformity, we set a threshold for each dataset, and all data points with fewer annotators than the threshold were removed. In our experiments, we studied the performance of all the methods by varying the number of annotators from one till the threshold, by taking a random subset of all annotators for a data point at each step (We maintained the same random seed across the methods, and conducted multiple trials to verify the results presented herewith). Also, the \textit{TREC2010} dataset has an `unknown' class, which we removed for our experiments. Table 1 lists the size, the number of classes, and the number of annotators in each dataset.
	\begin{table}[t]
		\begin{center}		
			\begin{scriptsize}
				\setlength\tabcolsep{3pt}
				\begin{tabular}{|P{1.1cm}||P{0.5cm}|P{0.8cm}|P{1cm}||P{1.1cm}|P{1.2cm}|P{1.1cm}|}
					\hline
					& \# qns &\# options (per qn)& Maximum \# of annotators (per qn) & Speedup of FDS over DS in Time (Iterations) & Speedup of FDS over IWMV in Time (Iterations) & Speedup of Hybrid over DS in Time (Iterations)\\
					\hhline{|=||=|=|=||=|=|=|}
					Adult2 & 305 & 4 & 9 & 6.61(7.87) & 1.32(1.15) & 2.30(2.43)\\ 
					\hline
					BM & 1000 & 2 & 5 & 2.69(4.51) & 1.70(1.02) & 1.49(2.03)\\ 
					\hline
					TREC2010 & 3670 & 4 & 5 & 7.84(8.64) & 6.09(2.93) & 4.39(4.59) \\
					\hline
					DAiSEE & 4628 & 4 & 10 & 6.57(7.37) & 4.40(2.04) & 4.11(4.37)\\
					\hline 
					LabelMe & 589 & 8 & 3 & 7.55(8.59) & 0.54(1.14) & 5.15(5.47)\\
					\hline
					RTE & 800 & 2 & 10 & 3.14(4.95) & 2.63(1.24) & 1.88(2.24)\\
					\hline
					SP & 4968 & 2 & 5 & 3.00(3.95) & 2.78(0.94) & 2.40(2.54)\\
					\hline
				\end{tabular}
				\captionof{table}{Datasets Used and Speedup of FDS and Hybrid}\label{datasettable}
			\end{scriptsize}
		\end{center}    
	\end{table}
	\vspace{-10pt}
	\paragraph{Baseline Methods:} A total of six aggregation algorithms were used in our experiments for evaluation - Majority Voting (MV), Dawid-Skene (DS) \cite{dawid1979maximum}, IWMV \cite{IWMV}, GLAD \cite{NIPS2009_3644}, proposed Fast Dawid-Skene (FDS), and the proposed hybrid algorithm. IWMV is among the fastest methods using EM for aggregation under general settings. \cite{IWMV} compared IWMV against other well-known aggregation methods, including \cite{Raykar:2010:LC:1756006.1859894}, \cite{Karger} and \cite{LPI}, and showed that IWMV gives an accuracy comparable to these algorithms but does so in a much lesser time. We hence compare our performance to IWMV in this work. GLAD \cite{NIPS2009_3644}, another popular method, was proposed only for questions with two choices, and we hence use this method for comparison only on the binary label datasets in our experiments. 
	
	\paragraph{Performance Metrics:} For each experiment, the following metrics were observed: the accuracy of the aggregated results (against provided ground truth), time taken and number of iterations needed for empirical convergence. For DS, FDS, and Hybrid, the negative log likelihood after each iteration was also observed. For MV, only the accuracy was observed. The experiments were conducted on a 4-core system with Intel Core i5-5200U 2.20GHz processors with 8GB RAM.
	
	\patchcmd{\subfigmatrix}{\hfill}{\hspace{0.01cm}}{}{}
	\begin{figure*}[t]
		\label{fig_result_graphs}
		\setlength\tabcolsep{0pt}
		\begin{tabular}{@{}ccccccc@{}}
			\centering
			\begin{footnotesize}Adult2\end{footnotesize} & \begin{footnotesize}BM\end{footnotesize} & \begin{footnotesize}TREC2010\end{footnotesize} & \begin{footnotesize}DAiSEE\end{footnotesize} & \begin{footnotesize}LabelMe\end{footnotesize} & \begin{footnotesize}RTE\end{footnotesize} & \begin{footnotesize}SP\end{footnotesize} \\
			\subfigure
			{
				\begin{tikzpicture}[scale=0.3]
				
				\begin{axis}[
				legend pos=outer north east,
				ymajorgrids=true,
				grid style=dashed,
				]
				
				\addplot[
				color=blue,
				mark=*,
				]
				coordinates {
					(1,0.7147540984)(2,0.7180327869)(3,0.7081967213)(4,0.7344262295)(5,0.7442622951)(6,0.7344262295)(7,0.7704918033)(8,0.7573770492)(9,0.7836065574)
				};
				\addplot[
				color=green,
				mark=square,
				]
				coordinates {
					(1,0.7147540984)(2,0.6950819672)(3,0.7409836066)(4,0.7704918033)(5,0.7442622951)(6,0.7508196721)(7,0.7540983607)(8,0.7540983607)(9,0.7573770492)
				};
				\addplot[
				color=red,
				mark=triangle,
				]
				coordinates {
					(1,0.7147540984)(2,0.7081967213)(3,0.7213114754)(4,0.7344262295)(5,0.7442622951)(6,0.7409836066)(7,0.7672131148)(8,0.7573770492)(9,0.7836065574)
				};
				\addplot[
				color=black,
				mark=x,
				]
				coordinates {
					(1,0.7147540984)(2,0.7049180328)(3,0.7540983607)(4,0.7573770492)(5,0.7409836066)(6,0.7508196721)(7,0.7540983607)(8,0.7442622951)(9,0.7508196721)
				};
				\addplot[
				color=purple,
				mark=o,
				]
				coordinates {
					(1,0.7147540984)(2,0.6819672131)(3,0.7508196721)(4,0.7672131148)(5,0.7442622951)(6,0.7508196721)(7,0.7475409836)(8,0.7540983607)(9,0.7508196721)
				};
				\end{axis}
				\end{tikzpicture}} & 
			\subfigure
			{
				\begin{tikzpicture}[scale=0.3]
				
				\begin{axis}[
				legend pos=outer north east,
				ymajorgrids=true,
				grid style=dashed,
				]
				
				\addplot[
				color=blue,
				mark=*,
				]
				coordinates {
					(1,0.685)(2,0.68)(3,0.699)(4,0.698)(5,0.712)
				};
				\addplot[
				color=green,
				mark=square,
				]
				coordinates {
					(1,0.685)(2,0.675)(3,0.679)(4,0.688)(5,0.697)
				};
				\addplot[
				color=red,
				mark=triangle,
				]
				coordinates {
					(1,0.685)(2,0.679)(3,0.694)(4,0.696)(5,0.709)
				};
				\addplot[
				color=black,
				mark=x,
				]
				coordinates {
					(1,0.685)(2,0.676)(3,0.686)(4,0.684)(5,0.696)
				};
				\addplot[
				color=purple,
				mark=o,
				]
				coordinates {
					(1,0.685)(2,0.685)(3,0.685)(4,0.694)(5,0.696)
				};
				\addplot[
				color=orange,
				mark=.,
				]
				coordinates {
					(1,0.685)(2,0.69)(3,0.685)(4,0.674)(5,0.695)
				};
				\end{axis}
				\end{tikzpicture}} & 
			\subfigure
			{
				\begin{tikzpicture}[scale=0.3]
				
				\begin{axis}[
				legend pos=outer north east,
				ymajorgrids=true,
				grid style=dashed,
				]
				
				\addplot[
				color=blue,
				mark=*,
				]
				coordinates {
					(1,0.3926430518)(2,0.4490463215)(3,0.483106267)(4,0.5359869139)(5,0.5638850889)
				};
				\addplot[
				color=green,
				mark=square,
				]
				coordinates {
					(1,0.3926430518)(2,0.4373297003)(3,0.4994550409)(4,0.5373500545)(5,0.5540355677)
				};
				\addplot[
				color=red,
				mark=triangle,
				]
				coordinates {
					(1,0.3926430518)(2,0.4435967302)(3,0.4811989101)(4,0.5436205016)(5,0.5707250342)
				};
				\addplot[
				color=black,
				mark=x,
				]
				coordinates {
					(1,0.3926430518)(2,0.4190735695)(3,0.489373297)(4,0.5215376227)(5,0.5529411765)
				};
				\addplot[
				color=purple,
				mark=o,
				]
				coordinates {
					(1,0.3926430518)(2,0.4144414169)(3,0.4678474114)(4,0.4991821156)(5,0.5212038304)
				};
				\end{axis}
				\end{tikzpicture}} & 
			\subfigure
			{
				\begin{tikzpicture}[scale=0.3]
				
				\begin{axis}[
				legend pos=outer north east,
				ymajorgrids=true,
				grid style=dashed,
				]
				
				\addplot[
				color=blue,
				mark=*,
				]
				coordinates {
					(1,0.3703543647)(2,0.4105445117)(3,0.3954191876)(4,0.5185825411)(5,0.514693172)(6,0.5131806396)(7,0.5432152118)(8,0.5246326707)(9,0.5386776145)(10,0.5311149525)
				};
				\addplot[
				color=green,
				mark=square,
				]
				coordinates {
					(1,0.3703543647)(2,0.4034140017)(3,0.4431719965)(4,0.4807692308)(5,0.4857389801)(6,0.5051858254)(7,0.5237683665)(8,0.5194468453)(9,0.5116681072)(10,0.4984874676)
				};
				\addplot[
				color=red,
				mark=triangle,
				]
				coordinates {
					(1,0.3703543647)(2,0.4170267934)(3,0.411840968)(4,0.5144770959)(5,0.5095073466)(6,0.5298184961)(7,0.5449438202)(8,0.5285220398)(9,0.5391097666)(10,0.5363007779)
				};
				\addplot[
				color=black,
				mark=x,
				]
				coordinates {
					(1,0.3703543647)(2,0.3900172861)(3,0.4019014693)(4,0.4572169404)(5,0.525713051)(6,0.5365168539)(7,0.5596369922)(8,0.5555315471)(9,0.5903197926)(10,0.5680639585)
				};
				\addplot[
				color=purple,
				mark=o,
				]
				coordinates {
					(1,0.3703543647)(2,0.3692739844)(3,0.4085998271)(4,0.426318064)(5,0.444900605)(6,0.4615384615)(7,0.4723422645)(8,0.4760155575)(9,0.4805531547)(10,0.4770959378)
				};
				\end{axis}
				\end{tikzpicture}} & 
			\subfigure
			{
				\begin{tikzpicture}[scale=0.3]
				
				\begin{axis}[
				legend pos=outer north east,
				ymajorgrids=true,
				grid style=dashed,
				]
				
				\addplot[
				color=blue,
				mark=*,
				]
				coordinates {
					(1,0.7623089983)(2,0.7640067912)(3,0.7877758913)
				};
				\addplot[
				color=green,
				mark=square,
				]
				coordinates {
					(1,0.7623089983)(2,0.7487266553)(3,0.7707979626)
				};
				\addplot[
				color=red,
				mark=triangle,
				]
				coordinates {
					(1,0.7623089983)(2,0.7674023769)(3,0.7894736842)
				};
				\addplot[
				color=black,
				mark=x,
				]
				coordinates {
					(1,0.7623089983)(2,0.7504244482)(3,0.765704584)
				};
				\addplot[
				color=purple,
				mark=o,
				]
				coordinates {
					(1,0.7623089983)(2,0.7215619694)(3,0.765704584)
				};
				\end{axis}
				\end{tikzpicture}} & 
			\subfigure
			{
				\begin{tikzpicture}[scale=0.3]
				
				\begin{axis}[
				legend pos=outer north east,
				ymajorgrids=true,
				grid style=dashed,
				]
				
				\addplot[
				color=blue,
				mark=*,
				]
				coordinates {
					(1,0.8)(2,0.79)(3,0.8725)(4,0.8925)(5,0.91125)(6,0.91125)(7,0.915)(8,0.91875)(9,0.9275)(10,0.9275)
				};
				\addplot[
				color=green,
				mark=square,
				]
				coordinates {
					(1,0.8)(2,0.8125)(3,0.8725)(4,0.8975)(5,0.915)(6,0.9225)(7,0.9125)(8,0.92375)(9,0.925)(10,0.91875)
				};
				\addplot[
				color=red,
				mark=triangle,
				]
				coordinates {
					(1,0.8)(2,0.79125)(3,0.88)(4,0.89125)(5,0.91125)(6,0.91125)(7,0.915)(8,0.92)(9,0.92875)(10,0.9275)
				};
				\addplot[
				color=black,
				mark=x,
				]
				coordinates {
					(1,0.8)(2,0.78125)(3,0.85875)(4,0.885)(5,0.9175)(6,0.92125)(7,0.91)(8,0.92625)(9,0.9275)(10,0.9275)
				};
				\addplot[
				color=purple,
				mark=o,
				]
				coordinates {
					(1,0.8)(2,0.7975)(3,0.85875)(4,0.86)(5,0.89375)(6,0.87875)(7,0.8825)(8,0.88375)(9,0.895)(10,0.895)
				};
				\addplot[
				color=orange,
				mark=.,
				]
				coordinates {
					(1,0.8)(2,0.86375)(3,0.85875)(4,0.9025)(5,0.91375)(6,0.91875)(7,0.91625)(8,0.9225)(9,0.93375)(10,0.9275)
				};
				\end{axis}
				\end{tikzpicture}} & 
			\subfigure
			{
				\begin{tikzpicture}[scale=0.3]
				
				\begin{axis}[
				legend pos=outer north east,
				ymajorgrids=true,
				grid style=dashed,
				]
				
				\addplot[
				color=blue,
				mark=*,
				]
				coordinates {
					(1,0.7904589372)(2,0.864331723)(3,0.8882850242)(4,0.9027777778)(5,0.9110305958)
				};
				\addplot[
				color=green,
				mark=square,
				]
				coordinates {
					(1,0.7904589372)(2,0.8558776167)(3,0.884057971)(4,0.9001610306)(5,0.9074074074)
				};
				\addplot[
				color=red,
				mark=triangle,
				]
				coordinates {
					(1,0.7904589372)(2,0.8655394525)(3,0.8868760064)(4,0.902979066)(5,0.9108293076)
				};
				\addplot[
				color=black,
				mark=x,
				]
				coordinates {
					(1,0.7904589372)(2,0.8586956522)(3,0.8772141707)(4,0.8963365539)(5,0.9023752013)
				};
				\addplot[
				color=purple,
				mark=o,
				]
				coordinates {
					(1,0.7904589372)(2,0.788647343)(3,0.8629227053)(4,0.866747182)(5,0.8824476651)
				};
				\addplot[
				color=orange,
				mark=.,
				]
				coordinates {
					(1,0.7904589372)(2,0.8297101449)(3,0.8627214171)(4,0.8822463768)(5,0.8913043478)
				};
				\end{axis}
				\end{tikzpicture}} \\
			\subfigure
			{
				\begin{tikzpicture}[scale=0.3]
				
				\begin{axis}[
				legend pos=outer north east,
				ymajorgrids=true,
				grid style=dashed,
				]
				
				\addplot[
				color=blue,
				mark=*,
				]
				coordinates {
					(1,0.0346820354)(2,0.3883323669)(3,0.8578050137)(4,0.5305502415)(5,0.6065449715)(6,1.1812446117)(7,0.9524474144)(8,0.668582201)(9,0.7890434265)
				};
				\addplot[
				color=green,
				mark=square,
				]
				coordinates {
					(1,0.0433580875)(2,0.0753462315)(3,0.1398756504)(4,0.0934870243)(5,0.0692028999)(6,0.1163470745)(7,0.1216783524)(8,0.1334779263)(9,0.1878612041)
				};
				\addplot[
				color=red,
				mark=triangle,
				]
				coordinates {
					(1,0.033028841)(2,0.1753511429)(3,0.2833549976)(4,0.3225502968)(5,0.3491220474)(6,0.3418936729)(7,0.3360276222)(8,0.4450104237)(9,0.3919889927)
				};
				\addplot[
				color=black,
				mark=x,
				]
				coordinates {
					(1,0.0355231762)(2,0.094111681)(3,0.1126391888)(4,0.1005189419)(5,0.1599042416)(6,0.1381061077)(7,0.2025053501)(8,0.1750547886)(9,0.1861321926)
				};
				\end{axis}
				\end{tikzpicture}} & 
			\subfigure
			{
				\begin{tikzpicture}[scale=0.3]
				
				\begin{axis}[
				legend pos=outer north east,
				ymajorgrids=true,
				grid style=dashed,
				]
				
				\addplot[
				color=blue,
				mark=*,
				]
				coordinates {
					(1,0.0593278408)(2,0.6203300953)(3,0.8350973129)(4,0.1861503124)(5,0.5905144215)
				};
				\addplot[
				color=green,
				mark=square,
				]
				coordinates {
					(1,0.0982289314)(2,0.2002208233)(3,0.2018971443)(4,0.1535782814)(5,0.2549500465)
				};
				\addplot[
				color=red,
				mark=triangle,
				]
				coordinates {
					(1,0.0551128387)(2,0.3184735775)(3,0.4568719864)(4,0.3007106781)(5,0.3789050579)
				};
				\addplot[
				color=black,
				mark=x,
				]
				coordinates {
					(1,0.1507251263)(2,0.3991253376)(3,0.249256134)(4,0.4386320114)(5,0.1799931526)
				};
				\addplot[
				color=orange,
				mark=.,
				]
				coordinates {
					(1,0.0935771465)(2,6.2674620152)(3,7.224709034)(4,2.9558873177)(5,2.7445700169)
				};
				\end{axis}
				\end{tikzpicture}} & 
			\subfigure
			{
				\begin{tikzpicture}[scale=0.3]
				
				\begin{axis}[
				legend pos=outer north east,
				ymajorgrids=true,
				grid style=dashed,
				]
				
				\addplot[
				color=blue,
				mark=*,
				]
				coordinates {
					(1,0.8667550087)(2,56.5786857605)(3,28.6649272442)(4,65.8875980377)(5,37.888890028)
				};
				\addplot[
				color=green,
				mark=square,
				]
				coordinates {
					(1,0.9969036579)(2,3.8249154091)(3,9.0065655708)(4,9.1470324993)(5,6.1260089874)
				};
				\addplot[
				color=red,
				mark=triangle,
				]
				coordinates {
					(1,0.8544027805)(2,10.2286441326)(3,10.6694591045)(4,11.0905911922)(5,11.1087253094)
				};
				\addplot[
				color=black,
				mark=x,
				]
				coordinates {
					(1,1.8852820396)(2,67.2027966976)(3,16.1378421783)(4,21.9081385136)(5,15.8872060776)
				};
				\end{axis}
				\end{tikzpicture}} & 
			\subfigure
			{
				\begin{tikzpicture}[scale=0.3]
				
				\begin{axis}[
				legend pos=outer north east,
				ymajorgrids=true,
				grid style=dashed,
				]
				
				\addplot[
				color=blue,
				mark=*,
				]
				coordinates {
					(1,0.6649360657)(2,40.8092784882)(3,48.3078978062)(4,70.405564785)(5,71.6696751118)(6,46.3514583111)(7,66.5292048454)(8,64.6717782021)(9,73.4134674072)(10,99.6033139229)
				};
				\addplot[
				color=green,
				mark=square,
				]
				coordinates {
					(1,0.8276641369)(2,3.574939251)(3,6.5229861736)(4,7.7969493866)(5,9.3732588291)(6,9.9729185104)(7,13.2241427898)(8,15.0871372223)(9,16.5652589798)(10,18.9662239552)
				};
				\addplot[
				color=red,
				mark=triangle,
				]
				coordinates {
					(1,0.6465461254)(2,5.5365962982)(3,13.8915581703)(4,13.7202095985)(5,14.4550738335)(6,14.9134073257)(7,16.4317111969)(8,22.5027289391)(9,23.401843071)(10,34.7314965725)
				};
				\addplot[
				color=black,
				mark=x,
				]
				coordinates {
					(1,1.0498731136)(2,14.9843654633)(3,14.1605768204)(4,25.0698599815)(5,34.7478704453)(6,83.9719145298)(7,53.7230751514)(8,51.0249986649)(9,100.392536879)(10,82.9250128269)
				};
				\end{axis}
				\end{tikzpicture}} & 
			\subfigure
			{
				\begin{tikzpicture}[scale=0.3]
				
				\begin{axis}[
				legend pos=outer north east,
				ymajorgrids=true,
				grid style=dashed,
				]
				
				\addplot[
				color=blue,
				mark=*,
				]
				coordinates {
					(1,0.1334607601)(2,4.028116703)(3,2.7392261028)
				};
				\addplot[
				color=green,
				mark=square,
				]
				coordinates {
					(1,0.1394340992)(2,0.5515549183)(3,0.3513114452)
				};
				\addplot[
				color=red,
				mark=triangle,
				]
				coordinates {
					(1,0.1195280552)(2,0.660323143)(3,0.6518347263)
				};
				\addplot[
				color=black,
				mark=x,
				]
				coordinates {
					(1,0.0688829422)(2,0.3313791752)(3,0.1715459824)
				};
				\end{axis}
				\end{tikzpicture}} & 
			\subfigure
			{
				\begin{tikzpicture}[scale=0.3]
				
				\begin{axis}[
				legend pos=outer north east,
				ymajorgrids=true,
				grid style=dashed,
				]
				
				\addplot[
				color=blue,
				mark=*,
				]
				coordinates {
					(1,0.0403387547)(2,0.5613822937)(3,0.8433725834)(4,1.0275821686)(5,0.407320261)(6,0.7001583576)(7,0.3837733269)(8,0.2267494202)(9,0.2961719036)(10,0.3385119438)
				};
				\addplot[
				color=green,
				mark=square,
				]
				coordinates {
					(1,0.0695331097)(2,0.1085121632)(3,0.1581466198)(4,0.1987450123)(5,0.2050540447)(6,0.1690099239)(7,0.2133939266)(8,0.1304950714)(9,0.1790511608)(10,0.2771937847)
				};
				\addplot[
				color=red,
				mark=triangle,
				]
				coordinates {
					(1,0.0382292271)(2,0.2137298584)(3,0.2925138474)(4,0.3072867393)(5,0.2757799625)(6,0.2854762077)(7,0.2920761108)(8,0.2746989727)(9,0.3147270679)(10,0.3276460171)
				};
				\addplot[
				color=black,
				mark=x,
				]
				coordinates {
					(1,0.075387001)(2,0.2511160374)(3,0.124822855)(4,0.2973849773)(5,0.5016331673)(6,0.3581020832)(7,0.6748847961)(8,0.7196099758)(9,0.5666649342)(10,0.7288057804)
				};
				\addplot[
				color=orange,
				mark=.,
				]
				coordinates {
					(1,0.0663247108)(2,7.5497150421)(3,3.1234180927)(4,2.4511742592)(5,5.304792881)(6,5.0181570053)(7,4.5251350403)(8,5.4739911556)(9,6.85029006)(10,3.9957911968)
				};
				\end{axis}
				\end{tikzpicture}} & 
			\subfigure
			{
				\begin{tikzpicture}[scale=0.3]
				
				\begin{axis}[
				legend pos=outer north east,
				ymajorgrids=true,
				grid style=dashed,
				]
				
				\addplot[
				color=blue,
				mark=*,
				]
				coordinates {
					(1,0.3821768761)(2,12.0007677078)(3,4.1331849098)(4,3.4475030899)(5,4.2648108006)
				};
				\addplot[
				color=green,
				mark=square,
				]
				coordinates {
					(1,0.5525462627)(2,2.5457980633)(3,1.7190337181)(4,1.4413399696)(5,1.7129518986)
				};
				\addplot[
				color=red,
				mark=triangle,
				]
				coordinates {
					(1,0.370223999)(2,2.5729827881)(3,2.6078674793)(4,2.309871912)(5,2.3132863045)
				};
				\addplot[
				color=black,
				mark=x,
				]
				coordinates {
					(1,1.5723991394)(2,5.7868909836)(3,5.9625639916)(4,3.4221427441)(5,5.1644203663)
				};
				\addplot[
				color=orange,
				mark=.,
				]
				coordinates {
					(1,0.7108900547)(2,8.9004328251)(3,8.3065419197)(4,8.8999679089)(5,30.9816091061)
				};
				\end{axis}
				\end{tikzpicture}} \\
			\setcounter{subfigure}{0}
			\subfigure
			{
				\begin{tikzpicture}[scale=0.3]
				
				\begin{axis}[
				legend pos=outer north east,
				ymajorgrids=true,
				grid style=dashed,
				]
				
				\addplot[
				color=blue,
				mark=*,
				]
				coordinates {
					(1,2)(2,19)(3,37)(4,21)(5,21)(6,37)(7,27)(8,17)(9,19)
				};
				\addplot[
				color=green,
				mark=square,
				]
				coordinates {
					(1,2)(2,3)(3,5)(4,3)(5,2)(6,3)(7,3)(8,3)(9,4)
				};
				\addplot[
				color=red,
				mark=triangle,
				]
				coordinates {
					(1,2)(2,8)(3,12)(4,12)(5,11)(6,10)(7,9)(8,11)(9,9)
				};
				\addplot[
				color=black,
				mark=x,
				]
				coordinates {
					(1,2)(2,4)(3,4)(4,3)(5,4)(6,3)(7,4)(8,3)(9,3)
				};
				\end{axis}
				\end{tikzpicture}} & 
			\subfigure
			{
				\begin{tikzpicture}[scale=0.3]
				
				\begin{axis}[
				legend pos=outer north east,
				ymajorgrids=true,
				grid style=dashed,
				]
				
				\addplot[
				color=blue,
				mark=*,
				]
				coordinates {
					(1,2)(2,21)(3,28)(4,6)(5,19)
				};
				\addplot[
				color=green,
				mark=square,
				]
				coordinates {
					(1,2)(2,4)(3,4)(4,3)(5,5)
				};
				\addplot[
				color=red,
				mark=triangle,
				]
				coordinates {
					(1,2)(2,8)(3,11)(4,7)(5,9)
				};
				\addplot[
				color=black,
				mark=x,
				]
				coordinates {
					(1,2)(2,5)(3,3)(4,5)(5,2)
				};
				\addplot[
				color=orange,
				mark=.,
				]
				coordinates {
					(1,1)(2,3)(3,3)(4,3)(5,2)
				};
				\end{axis}
				\end{tikzpicture}} & 
			\subfigure
			{
				\begin{tikzpicture}[scale=0.3]
				
				\begin{axis}[
				legend pos=outer north east,
				ymajorgrids=true,
				grid style=dashed,
				]
				
				\addplot[
				color=blue,
				mark=*,
				]
				coordinates {
					(1,2)(2,100)(3,42)(4,86)(5,46)
				};
				\addplot[
				color=green,
				mark=square,
				]
				coordinates {
					(1,2)(2,6)(3,12)(4,11)(5,7)
				};
				\addplot[
				color=red,
				mark=triangle,
				]
				coordinates {
					(1,2)(2,17)(3,15)(4,14)(5,13)
				};
				\addplot[
				color=black,
				mark=x,
				]
				coordinates {
					(1,2)(2,52)(3,10)(4,12)(5,8)
				};
				\end{axis}
				\end{tikzpicture}} & 
			\subfigure
			{
				\begin{tikzpicture}[scale=0.3]
				
				\begin{axis}[
				legend pos=outer north east,
				ymajorgrids=true,
				grid style=dashed,
				]
				
				\addplot[
				color=blue,
				mark=*,
				]
				coordinates {
					(1,2)(2,100)(3,87)(4,100)(5,83)(6,45)(7,54)(8,41)(9,42)(10,38)
				};
				\addplot[
				color=green,
				mark=square,
				]
				coordinates {
					(1,2)(2,7)(3,10)(4,10)(5,10)(6,9)(7,10)(8,9)(9,9)(10,7)
				};
				\addplot[
				color=red,
				mark=triangle,
				]
				coordinates {
					(1,2)(2,12)(3,23)(4,19)(5,15)(6,14)(7,13)(8,14)(9,13)(10,13)
				};
				\addplot[
				color=black,
				mark=x,
				]
				coordinates {
					(1,2)(2,19)(3,12)(4,16)(5,17)(6,33)(7,17)(8,13)(9,21)(10,14)
				};
				\end{axis}
				\end{tikzpicture}} & 
			\subfigure
			{
				\begin{tikzpicture}[scale=0.3]
				
				\begin{axis}[
				legend pos=outer north east,
				ymajorgrids=true,
				grid style=dashed,
				]
				
				\addplot[
				color=blue,
				mark=*,
				]
				coordinates {
					(1,2)(2,59)(3,35)
				};
				\addplot[
				color=green,
				mark=square,
				]
				coordinates {
					(1,2)(2,7)(3,4)
				};
				\addplot[
				color=red,
				mark=triangle,
				]
				coordinates {
					(1,2)(2,9)(3,8)
				};
				\addplot[
				color=black,
				mark=x,
				]
				coordinates {
					(1,2)(2,9)(3,4)
				};
				\end{axis}
				\end{tikzpicture}} & 
			\subfigure
			{
				\begin{tikzpicture}[scale=0.3]
				
				\begin{axis}[
				legend pos=outer north east,
				ymajorgrids=true,
				grid style=dashed,
				]
				
				\addplot[
				color=blue,
				mark=*,
				]
				coordinates {
					(1,2)(2,25)(3,35)(4,41)(5,15)(6,26)(7,14)(8,8)(9,10)(10,11)
				};
				\addplot[
				color=green,
				mark=square,
				]
				coordinates {
					(1,2)(2,3)(3,4)(4,5)(5,5)(6,4)(7,5)(8,3)(9,4)(10,6)
				};
				\addplot[
				color=red,
				mark=triangle,
				]
				coordinates {
					(1,2)(2,8)(3,10)(4,10)(5,9)(6,9)(7,9)(8,8)(9,9)(10,9)
				};
				\addplot[
				color=black,
				mark=x,
				]
				coordinates {
					(1,2)(2,5)(3,2)(4,4)(5,6)(6,4)(7,7)(8,7)(9,5)(10,6)
				};
				\addplot[
				color=orange,
				mark=.,
				]
				coordinates {
					(1,1)(2,3)(3,2)(4,3)(5,4)(6,4)(7,4)(8,3)(9,3)(10,2)
				};
				\end{axis}
				\end{tikzpicture}} & 
			\subfigure
			{
				\begin{tikzpicture}[scale=0.3]
				
				\begin{axis}[
				legend pos=outer north east,
				ymajorgrids=true,
				grid style=dashed,
				]
				
				\addplot[
				color=blue,
				mark=*,
				]
				coordinates {
					(1,2)(2,50)(3,21)(4,17)(5,20)
				};
				\addplot[
				color=green,
				mark=square,
				]
				coordinates {
					(1,2)(2,9)(3,6)(4,5)(5,6)
				};
				\addplot[
				color=red,
				mark=triangle,
				]
				coordinates {
					(1,2)(2,11)(3,11)(4,10)(5,10)
				};
				\addplot[
				color=black,
				mark=x,
				]
				coordinates {
					(1,2)(2,7)(3,7)(4,4)(5,6)
				};
				\addplot[
				color=orange,
				mark=.,
				]
				coordinates {
					(1,1)(2,2)(3,2)(4,2)(5,3)
				};
				\end{axis}
				\end{tikzpicture}} \\
			\begin{tikzpicture}
			\begin{customlegend}[legend columns=-1,
			legend style={
				draw=none,
				column sep=1ex,
			},legend entries={\begin{footnotesize}DS\end{footnotesize}}]
			\addlegendimage{blue,mark=*,sharp plot}
			\end{customlegend} 
			\end{tikzpicture} &
			\begin{tikzpicture}
			\begin{customlegend}[legend columns=-1,
			legend style={
				draw=none,
				column sep=1ex,
			},legend entries={\begin{footnotesize}IWMV\end{footnotesize}}]
			\addlegendimage{black,mark=x,sharp plot}
			\end{customlegend} 
			\end{tikzpicture} &
			\begin{tikzpicture}
			\begin{customlegend}[legend columns=-1,
			legend style={
				draw=none,
				column sep=1ex,
			},legend entries={\begin{footnotesize}MV\end{footnotesize}}]
			\addlegendimage{purple,mark=o,sharp plot}
			\end{customlegend} 
			\end{tikzpicture} &
			\begin{tikzpicture}
			\begin{customlegend}[legend columns=-1,
			legend style={
				draw=none,
				column sep=1ex,
			},legend entries={\begin{footnotesize}GLAD\end{footnotesize}}]
			\addlegendimage{orange,sharp plot}
			\end{customlegend} 
			\end{tikzpicture} &
			\begin{tikzpicture}
			\begin{customlegend}[legend columns=-1,
			legend style={
				draw=none,
				column sep=1ex,
			},legend entries={\begin{footnotesize}FDS\end{footnotesize}}]
			\addlegendimage{green,mark=square,sharp plot}
			\end{customlegend} 
			\end{tikzpicture} &
			\begin{tikzpicture}
			\begin{customlegend}[legend columns=-1,
			legend style={
				draw=none,
				column sep=1ex,
			},legend entries={\begin{footnotesize}Hybrid\end{footnotesize}}]
			\addlegendimage{red,mark=triangle,sharp plot}
			\end{customlegend} 
			\end{tikzpicture}
		\end{tabular}
		\caption{Experimental results: \textit{(Row 1:)} Accuracy of different methods across the considered datasets; \textit{(Row 2:)} Time taken in seconds to converge; and \textit{(Row 3:)} Number of iterations to converge. X-axis denotes the varying number of annotators studied for each dataset.}
	\end{figure*}
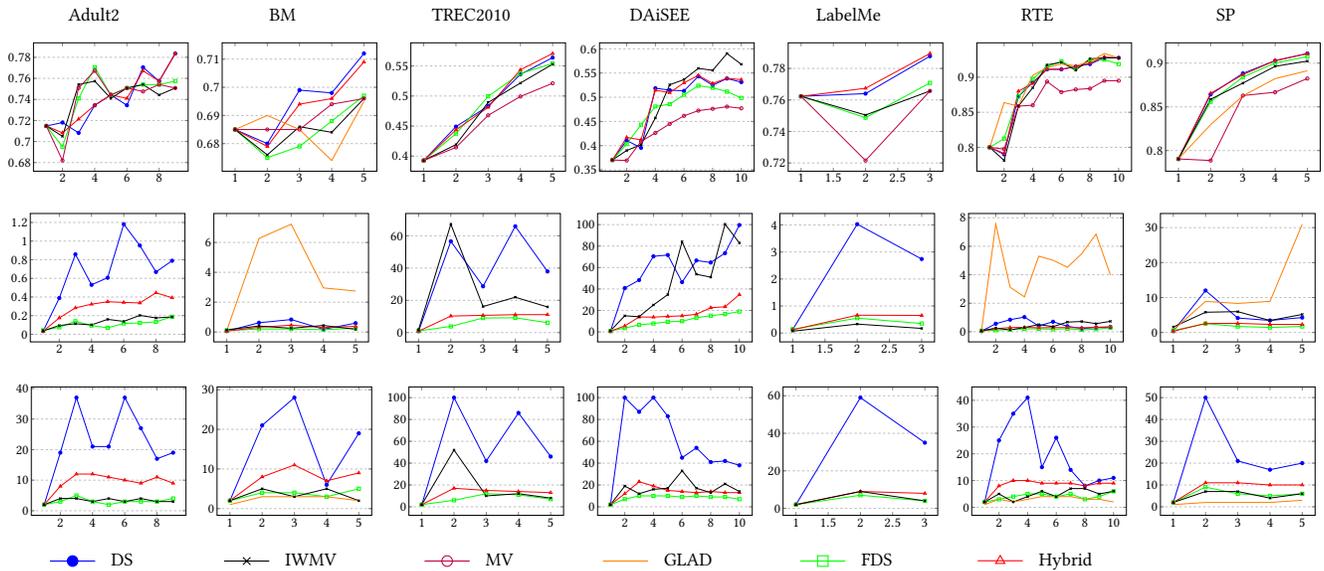
	
	\paragraph{Results:} The results of our experiments are presented in Figure 1 and Table \ref{logltable}. Table \ref{datasettable} shows the speedup in time and number of iterations needed to converge of FDS over DS and IWMV and of Hybrid over DS, averaged over all observations with varying number of annotators.

	\begin{table}[h]
		\begin{center}
			\begin{scriptsize}
				\begin{tabular}{ |c||c|c|c|}
					\hline
					& FDS & DS & Hybrid \\
					\hhline{|=||=|=|=|}
					Adult2 & 1283.75 & 1153.09 & 1154.97  \\ 
					\hline
					BM & 2110.16 & 2094.76 & 2100.32  \\ 
					\hline
					TREC2010 & 13109.26 & 12180.84 & 12346.91 \\
					\hline 
					DAiSEE & 39968.08 & 36178.16 & 36350.61 \\
					\hline
					LabelMe & 1714.50 & 1655.94 & 1660.06 \\
					\hline
					RTE & 3741.61 & 3679.63 & 3680.32 \\
					\hline
					SP & 12472.00 & 12433.70 & 12440.70 \\
					\hline
				\end{tabular}
			\end{scriptsize}
			\captionof{table}{Negative Log Likelihood at convergence of FDS, DS and Hybrid methods}\label{logltable}
		\end{center}
	\end{table}
	
	\paragraph{Performance Analysis of Fast Dawid-Skene:} The results show that FDS gives similar accuracies when compared to DS, Hybrid, GLAD, and IWMV, and a significant improvement over MV, on most datasets except for the BM and LabelMe datasets. In LabelMe, the aggregation accuracy is not at par with DS or Hybrid but is still significantly higher than MV and comparable to IWMV. In the BM dataset, the accuracies of FDS and IWMV are slightly lower than MV but both are comparable to each other. In terms of time taken, we notice that apart from the LabelMe dataset, FDS performs much better than DS, Hybrid, IWMV and GLAD all through. In the case of LabelMe, IWMV outperforms in terms of speed but the margin is very small (around 0.1 sec). This leads us to infer that in general, FDS gives comparable accuracies to other methods while taking significantly lesser time.
	
	\paragraph{Performance Analysis of the Hybrid Method:} The goal of the Hybrid algorithm is to converge to a similar likelihood as DS in much lesser time. From the experiments (especially Table 2), we see that this is indeed the case - the log likelihood of the Hybrid algorithm is close to that of DS and consistently better than FDS. This naturally leads to accuracies almost similar to those obtained by DS, as is confirmed in the results. The total time taken for convergence is much lower for Hybrid as compared to DS. Moreover, the time taken for convergence by Hybrid is consistently low and does not deviate as much as IWMV. While IWMV  outperforms Hybrid with respect to time in a few datasets, the proposed Hybrid outperforms IWMV on accuracy on those datasets. These observations support Hybrid to be an algorithm which performs with accuracies similar to DS in a much lesser time consistently over datasets.
	\paragraph{Implementation Details:}
	We discuss two important implementation details of the proposed methods in this section: \textit{initialization} and \textit{stopping conditions}. 
	As argued in \cite{dawid1979maximum}, a symmetric initialization of the parameters (all $P(Y_q = c)$s to be $1 / C$) corresponds to a start from a saddle point, from where the EM algorithm faces difficulty in converging. Instead, a good initialization is to start with the majority voting estimate. While performing majority voting, it could often happen that there is a tie between two or more options with the highest number of votes. In such situations, we randomly choose an option among those which received the highest votes\footnote{We also tried a variant, in which the option with the highest running class marginal was used to break ties.
		But this variant did not perform as well as the randomized majority voting across all methods. We also ran many trials with different random seeds, and found the results to almost the same as those presented.}. We maintained the same random seed for all methods which required this decision.
	
	The ideal convergence criterion would be when the answer sheet proposed by an algorithm stops changing. This condition is met within a few iterations for FDS and Hybrid, but DS does not converge using this criterion in a reasonable number of steps. For example, in case of the \textit{DAiSEE} dataset, DS did not converge even after 100 iterations (as compared to $\le 10$ for FDS). To address this issue,
	we set the convergence criterion as the point when the difference in class marginals is less than $10^{-4}$.
	We do not include the changes in participant error rates in the final convergence criterion because we observed that its fluctuations could lead to stopping prematurely. Similarly, the criterion for switching from DS to FDS in the Hybrid algorithm is the point when the change in class marginals is less than 0.005 (which happened approximately between 45-75\% of total iterations across the datasets).
\vspace{-5pt}
\section{Online Vote Aggregation}
Online aggregation of crowdsourced responses is an important setting in today's applications, where data points may be streaming in large data applications. 
	We consider a setting in which we have access to an initial set of questions and have obtained the proposed answer key using FDS. We also have $P(Y = c)$ and $P(c_a| Y = a) \,\forall\, c, a$ at this time. When we receive a new question and the answers from multiple participants for this new question, we first estimate the answer for this question directly using majority voting. We then update the parameters using the M-step in Algorithm \ref{fdsalgorithm}. After the M-step, we run the E-step only for this question to re-obtain the aggregated choice. To update the new knowledge which we have regarding the new participants, we run the M-step for one last time. We conducted experiments on the \textit{SP} dataset\footnote{More results, including on other datasets, on \url{https://sites.google.com/view/fast-dawid-skene/}},
	and observed almost the same accuracy for online FDS as offline FDS (Table 4) for different number of annotators. Table 3 shows the results for the max number of annotators (= 5).
	\begin{center}
		\label{onlinetable}
		\begin{scriptsize}
			\begin{tabular}{ |c||c|c|c|}
				\hline
				& DS & FDS & Hybrid\\
				\hhline{|=||=|=|=|}
				Accuracy & 90.94\% & 90.60\% & 90.64\%\\ 
				\hline
				Time taken to converge (s) & 4.40 & 3.76 & 4.09 \\ 
				\hline
				\# Iterations to converge & 26 & 4 & 5\\
				\hline 
			\end{tabular}
			\captionof{table}{Online Vote Aggregation on \textit{SP} dataset.
			}
		\end{scriptsize}
	\end{center}
    \vspace{-10pt}
	\begin{center}
		\label{onvsofftable}
		\begin{scriptsize}
			\begin{tabular}{ |c||c|c|c|c|}
				\hline
				Accuracy & 2 & 3 & 4 & 5\\
				\hhline{|=||=|=|=|=|}
				FDS & 85.59\% & 88.41\% & 90.02\% & 90.74\% \\ 
				\hline
				Online FDS & 83.57\% &  88.06\% & 89.90\% & 90.60\%\\ 
				\hline
			\end{tabular}
			\captionof{table}{Online FDS vs FDS for varying number of annotators.}
		\end{scriptsize}
	\end{center}
	
	\section{Extension to Multiple Correct Options}
	\label{discussions}
	The proposed FDS method can be extended to solve the aggregation problem under different settings. We describe an extension below, using the same notations as in Section \ref{subsec_preliminaries}.
	
	In real-world machine learning settings such as multi-label learning, a data point might belong to multiple classes, which would result in more than one true choice per question. For such cases, we now assume that participants are allowed to choose more than one choice for each question. Our Algorithm \ref{fdsalgorithm} originally assumes that every question has exactly one correct choice. To overcome this limitation, we can make a simple modification in how we interpret questions when multiple options are correct. We assume that every (question, option) pair is a separate binary classification problem, where the label is true if the option is chosen for that question, and false otherwise. This transforms a task with $Q$ questions and $C$ options each to a task with $QC$ questions and two options each. This is valid because the correctness of an option is independent of the correctness of all other options for that question in this setting. We ran experiments using this model on the Affect Annotation Love dataset \textit{(AffectAnnotation)} used in \cite{DUAN20145723} (which was specifically developed for this setting) on FDS, and compared our performance with DS and Hybrid. Our results are summarized in Table 5 (annotators=5, averaged over five subsets), showing the significantly improved results of FDS over DS. Hybrid attempts to follow DS in the likelihood estimation, and thus does not perform as well as FDS in this case. Besides, our results for FDS also performed better than the methods proposed in \cite{DUAN20145723}, which showed a best accuracy of $\approx92\%$ on this dataset.
	\begin{center}
		\label{multtable}
		\begin{scriptsize}
			\begin{tabular}{ |c||c|c|c|}
				\hline
				& DS & FDS & Hybrid \\
				\hhline{|=||=|=|=|}
				Accuracy & 88.66\% & 94.14\% & 89.26\%  \\ 
				\hline
				Time taken to converge (s) & 0.44 & 0.057 & 0.14 \\ 
				\hline
				\# Iterations to converge & 29.6 & 2 & 5.8 \\
				\hline 
			\end{tabular}
			\captionof{table}{Multiple Correct Options setting on \textit{AffectAnnotation} data.}
		\end{scriptsize}
	\end{center}
    \vspace{-20pt}
	\section{Conclusion}
	\label{conclusion}
	In this paper we introduced a new EM-based method for vote aggregation in crowdsourced data settings. Our method, Fast Dawid-Skene (FDS), turns out to be a `hard' version of the popular Dawid-Skene (DS) algorithm, and shows up to 7.84x speedup over DS and up to 6.09x speedup over IWMV in time taken for convergence. We also propose a hybrid variant that can switch between DS and FDS to provide the best in terms of accuracy and speed. We compared the performance of the proposed methods against other state-of-the-art EM algorithms including DS, IWMV and GLAD, and our results showed that FDS and the Hybrid approach indeed provide very fast convergence at comparable accuracies to DS, IWMV and GLAD. We proved that our algorithm converges to the estimated labels at a linear rate. We also showed how the proposed methods can be used for online vote aggregation, and extended to the setting where there are multiple correct answers, showing the generalizability of the methods.

\bibliographystyle{ACM-Reference-Format}
\bibliography{fastdawidskene}

\end{document}